\definecolor{envNodeColor}{rgb}{0.6,0.6,0.9}
\definecolor{envArrowColor}{rgb}{0.2,0.2,0.8}
\definecolor{intNodeColor}{rgb}{0.5,0.8,0.5}
\definecolor{intArrowColor}{rgb}{0.1,0.8,0.1}
\definecolor{boolNodeColor}{rgb}{0.8,0.5,0.5}
\definecolor{boolArrowColor}{rgb}{0.8,0.2,0.2}
\newcommand{\seq}{\bar}
\newcommand{\si}[1]{^{#1}}
\newcommand{\Do}{\mathrm{Do}}
\newcommand{\hist}[1]{h_{#1}}
\newcommand{\althist}[1]{h_{#1}'}
\newcommand{\dohist}[1]{o_{\le #1} \mid \Do(a_{<#1})}
\newcommand{\val}[4]{E^{#1}_{#2}(#3; #4)}
\newcommand{\bool}{B}
\newcommand{\fig}[1]{Figure~\ref{fig:#1}}
\newcommand{\eqn}[1]{Equation~\ref{eqn:#1}}
\newcommand{\lem}[1]{Lemma~\ref{lem:#1}}
\newcommand{\thm}[1]{Theorem~\ref{thm:#1}}
\newcommand{\prop}[1]{Proposition~\ref{prop:#1}}
\newcommand{\cor}[1]{Corollary~\ref{cor:#1}}
\DeclareMathOperator*{\argmax}{argmax}
\DeclareMathOperator{\image}{image}
\newtheorem{proposition}[theorem]{Proposition}
\newtheorem{corollary}[theorem]{Corollary}
\title{Servant of Many Masters: \\ Shifting priorities in Pareto-optimal sequential decision-making}
\author{Andrew Critch, Stuart Russell \\ University of California, Berkeley \\ \{critch, russell\}@berkeley.edu}
\begin{document}

\maketitle

\begin{abstract}
It is often argued that an agent making decisions on behalf of two or more principals who have different utility functions should adopt a {\em Pareto-optimal} policy, i.e., a policy that cannot be improved upon for one agent without making sacrifices for another.  A famous theorem of Harsanyi shows that, when the principals have a common prior on the outcome distributions of all policies, a Pareto-optimal policy for the agent is one that maximizes a fixed, weighted linear combination of the principals' utilities.   
 
In this paper, we show that Harsanyi's theorem does not hold for principals with different priors, and derive a more precise generalization which does hold, which constitutes our main result.  In this more general case, the relative weight given to each principal's utility should evolve over time according to how well the agent's observations conform with that principal's prior. The result has implications for the design of contracts, treaties, joint ventures, and robots.


\end{abstract}


\section{Introduction}

As AI systems take on an increasingly pivotal decision-making role in human society, an important question arises: \emph{Whose values should a powerful decision-making machine be built to serve?} \citep{bostrom2014superintelligence}

Consider, informally, a scenario wherein two or more principals---perhaps individuals, companies, or states---are considering cooperating to build or otherwise obtain an ``agent" that will then interact with an environment on their behalf.  The ``agent'' here could be anything that follows a policy, such as a robot, a corporation, or a web-based AI system. In such a scenario, the principals will be concerned with the question of ``how much'' the agent will prioritize each principal's interests, a question which this paper addresses quantitatively.

One might be tempted to model the agent as maximizing the expected value, given its observations, of some utility function $U$ of the environment that equals a weighted sum 
\begin{equation}\label{eqn:harsanyi}
w\si{1}U\si{1} + w\si{2}U\si{2}
\end{equation}
of the principals' individual utility functions $U\si{1}$ and $U\si{2}$, as Harsanyi's social aggregation theorem \citep{harsanyi1980cardinal} recommends.  Then the question of prioritization could be reduced to that of choosing values for the weights $w\si{i}$.  

However, this turns out to be a suboptimal approach, from the perspective of the principals.  As we shall see in \prop{impossibility}, this solution form is not generally compatible with Pareto-optimality when agents have different beliefs.  Harsanyi's setting does not account for agents having different priors, nor for decisions being made sequentially, after future observations.  

In such a setting, we need a new form of solution, exhibited in this paper.  The solution is presented along with a recursion (\thm{main}) that characterizes solutions by a process algebraically similar to, but meaningfully different from, Bayesian updating.  The updating process resembles a kind of bet-settling between the principals, which allows them each to expect to benefit from the veracity of their own beliefs.

Qualitatively, this phenomenon can be seen in isolation whenever two people make a bet on a piece of decision-irrelevant trivia.  If neither Alice nor Bob would base any important decision on whether Michael Jackson was born in 1958 or 1959, they might still make a bet for \$100 on the answer.  For a person chosen to arbitrate the bet (their ``agent"), Michael Jackson's birth year now becomes a decision-relevant observation: it determines which of Alice and Bob gets the money!

Even in scenarios where differences in belief are not decision-irrelevant, once might expect some ``degree" of bet-settling to arise from the disagreement.  The main result of this paper (\thm{main}) is a precise formulation of exactly how and how much a Pareto-optimal agent will tend to prioritize each of its principals over time, as a result of differences in their implicit predictions about the agent's observations.

\subsection{Related work}

This paper may be viewed as extending or complimenting results in several areas:

\paragraph{Value alignment theory.}  The ``single principal'' value alignment problem---that of aligning the value function of an agent with the values of single human, or a team of humans in close agreement with one another---is already a very difficult one and should not be swept under the rug; approaches like inverse reinforcement learning (IRL) \citep{russell1998learning} \citep{ng2000algorithms} \citep{abbeel2004apprenticeship} and cooperative inverse reinforcement learning (CIRL) \citep{hadfield2016cooperative} have only begun to address it.

\paragraph{Social choice theory.} The whole of social choice theory and voting theory may be viewed as an attempt to specify an agreeable formal policy to enact on behalf of a group.  Harsanyi's utility aggregation theorem \citep{harsanyi1980cardinal} suggests one form of solution: maximizing a linear combination of group members' utility functions.  The present work shows that this solution is inappropriate when principals have different beliefs, and \thm{main} may be viewed as an extension of Harsanyi's form that accounts simultaneously for differing priors and the prospect of future observations.  Indeed, Harsanyi's form follows as a direct corollary of \thm{main} when principals do share the same beliefs (\cor{harsanyi}).

\paragraph{Bargaining theory.} The formal theory of bargaining, as pioneered by \citep{nash1950bargaining} and carried on by \citep{myerson1979incentive}, \citep{myerson2013game}, and \citep{myerson1983efficient}, is also topical.  Future investigation in this area might be aimed at generalizing their work to sequential decision-making settings, and this author recommends a focus on research specifically targeted at resolving conflicts.

\paragraph{Multi-agent systems.} There is ample literature examining multi-agent systems using sequential decision-making models.  \citet{shoham2008multiagent} survey various models of multiplayer games using an MDP to model each agent's objectives.  Chapter 9 of the same text surveys social choice theory, but does not account for sequential decision-making.  

\citet{zhang2014fairness} may be considered a sequential decision-making approach to social choice: they use MDPs to represent the decisions of players in a competitive game, and exhibit an algorithm for the players that, if followed, arrives at a Pareto-optimal Nash equilibrium satisfying a certain fairness criterion.  Among the literature surveyed here, that paper is the closest to the present work in terms of its intended application: roughly speaking, achieving mutually desirable outcomes via sequential decision-making.  However, that work is concerned with an ongoing interaction between the players, rather than selecting a policy for a single agent to follow as in this paper.  


\paragraph{Multi-objective sequential decision-making.} There is also a good deal of work on Multi-Objective Optimization (MOO)  \citep{tzeng2011multiple}, including for sequential decision-making, where solution methods have been called Multi-Objective Reinforcement Learning (MORL).  For instance, \citet{gabor1998multi} introduce a MORL method called Pareto Q-learning for learning a set of a Pareto-optimal polices for a Multi-Objective MDP (MOMDP).  \citet{soh2011evolving} define Multi-Reward Partially Observable Markov Decision Processes (MR-POMDPs), and use use genetic algorithms to produce non-dominated sets of policies for them.  \citet{roijers2015point} refer to the same problems as Multi-objective POMDPS (MOPOMDPs), and provide a bounded approximation method for the optimal solution set for all possible weightings of the objectives.  \citet{wang2014multi} surveys MORL methods, and contributes Multi-Objective Monte-Carlo Tree Search (MOMCTS) for discovering multiple Pareto-optimal solutions to a multi-objective optimization problem.   \citet{wray2015multi} introduce Lexicographic Partially Observable Markov Decision Process (LPOMDPs), along with two accompanying solution methods.

However, none of these or related works addresses scenarios where the objectives are derived from principals with differing beliefs, from which the priority-shifting phenomenon of \thm{main} arises.  Differing beliefs are likely to play a key role in negotiations, so for that purpose, the formulation of multi-objective decision-making adopted here is preferable.

\section{Notation}

Random variables are denoted by uppercase letters, e.g., $S_1$, and lowercase letters, e.g., $s_1$, are used as indices ranging over the values of a variable, as in the equation
\[
\EE[S_1] = \sum_{s_1} \PP(s_1)\cdot s_1.
\]

Given a set $A$, the set of probability distributions on $A$ is denoted $\Delta A$.

Sequences are denoted by overbars, e.g., given a sequence $(s_1,\ldots,s_n)$, $\seq s$ stands for the whole sequence.   Subsequences are denoted
by subscripted inequalities, so e.g., $s_{\le 4}$ stands for $(s_1,s_2,s_3)$,
and $s_{> 4}$ stands for $(s_5,\ldots,s_n)$.

\section{Formalism}\label{sec:formalism}
\emph{N.B.: All results in this paper generalize directly from agents with two principals to agents with several, but for clarity of exposition, the case of two principals will be prioritized.}

Consider a scenario wherein Alice and Bob will share some cake, and have different predictions of the cake's color.  Even if the color would be decision-irrelevant for either Alice or Bob on their own (they don't care what color the cake is), we will show that the difference between their predictions will tend to make the cake color a decision-relevant observation for a Pareto-optimal cake-splitting policy that is adopted before they see the cake.  Specifically, we will show that Pareto-optimal policies tend to incorporate some degree of bet-settling between Alice and Bob, where the person who was more right about the color of the cake will end up getting more of it.

\subsection{Serving multiple principals as a single POMDP}
To formalize such scenarios, where a single agent acts on behalf of multiple principals, we need some definitions.

We encode each principal $j$'s view of the agent's decision problem as a finite horizon POMDP, $D\si{j} = (\Ss\si{j},\Aa,T\si{j}, U\si{j}, \Oo, \Omega\si{j}, n)$, which simultaneously represents that principal's beliefs about the environment, and the principal's utility function (see \citet{russell2003artificial} for an introduction to POMDPs). These symbols take on their usual meaning:
\begin{itemize}
\item $\Ss\si{j}$ represents a set of possible states $s$ of the environment,
\item $\Aa$ represents the set of possible actions $a$ available to the agent,
\item $T\si{j}$ represents the conditional probabilities principal $j$ believes will govern the environment state transitions, i.e., $\PP\si{j}(s_{i+1}\mid s_i a_i)$,
\item $U\si{j}$ represents principal $j$'s utility function from sequences of environmental states $(s_1,\ldots,s_n)$ to $\RR$; for the sake of generality, $U\si{j}$ is \emph{not assumed} to be additive over time, as reward functions often are, 
\item $\Oo$ represents the set of possible observations $o$ of the agent, 
\item $\Omega\si{j}$ represents the conditional probabilities principal $j$ believes will govern the agent's observations, i.e., $\PP\si{j}(o_i\mid s_i)$, and
\item $n$ is the horizon (number of time steps)
\end{itemize}

This POMDP structure is depicted by the Bayesian network in \fig{pomdp}.  (See \citet{darwiche2009modeling} for an intro to Bayesian networks.)  At each point in time $i$, the agent has a time-specific policy $\pi_i$, which receives the agent's \emph{history},
\[
h_i := (o_{\le i},a_{<i}),
\]
and returns a distribution $\pi_i(- \mid \hist i)$ on actions $a_i$, which will then be used to generate an action $a_i$ with probability $\pi(a_i \mid \hist i)$.  Thus, principal $j$'s subjective probability of an outcome $(\seq s, \seq o, \seq a)$ is given by a probability distribution $\PP\si{j}$ that takes $\pi$ as a parameter:
\begin{multline}
\label{eqn:pomdp}
\PP\si{j}(\seq s, \seq o, \seq a ; \pi) := \PP\si{j}(s_1) \cdot \prod_{i=1}^{n} 
\PP\si{j}(o_i \mid s_i)\, \\  \pi(a_i \mid \hist i)\, \PP\si{j}(s_{i+1} \mid s_i,a_i)
\end{multline}

\textbf{Full-memory assumption.}  \emph{Every policy $\pi$ in this paper will be assumed to employ a ``full memory"}, so it decomposes into a sequence of policies $\pi_i$ for each time step.  In \fig{pomdp}, the part of the Bayes net governed by the full-memory policy is highlighted in green.

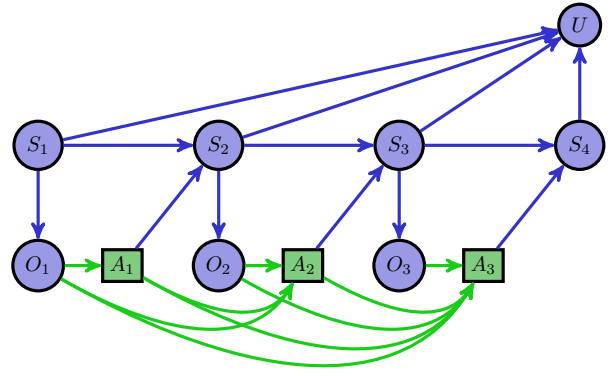
\begin{figure}
\pgfmathtruncatemacro{\len}{2}
\pgfmathtruncatemacro{\lenn}{\len + 1}
\pgfmathtruncatemacro{\lennn}{\len + 2}
\begin{center}
\begin{tikzpicture}[scale=0.8, every node/.style={transform shape}, node distance=1cm, auto, very thick, >=stealth']

  \foreach \x in {1,...,\lennn}{
        \node [draw, circle, fill=envNodeColor]  (s\x) at (3*\x,2) {$S_\x$};
        \node [color=envArrowColor, below left = -0.1cm and 0.3cm of s\x] (T\x) {};
        }

  \foreach \x in {1,...,\lenn}{
       \node [draw, circle, fill=envNodeColor]  (o\x) at (3*\x,0) {$O_\x$};
       \node [draw, rectangle, fill=intNodeColor]  (a\x) at (3*\x+1.4,0) {$A_\x$};
       }

  \foreach \x in {1,...,\lenn}{
      \pgfmathtruncatemacro{\xn}{\x +1 }
      \draw[->,envArrowColor] (s\x) to node [right] {} (o\x);
      \draw[->,intArrowColor] (o\x) to node {} (a\x);
      \draw[->,envArrowColor] (a\x) to node {} (s\xn);
      \draw[->,envArrowColor] (s\x)--(s\xn);
      }
      
  \foreach \x in {1,...,\len}{
      \pgfmathtruncatemacro{\xn}{\x +1 };
      \foreach \y in {\xn,...,\lenn}{
          \draw[->,intArrowColor] (o\x) to [out=330,in=240] node {} (a\y);
          \draw[->,intArrowColor] (a\x) to [out=330,in=240] node {} (a\y);
          }
      }
  
  \node [draw, circle, fill=envNodeColor] (u) at (3*\lennn,4) {$U$};

  \foreach \x in {1,...,\lennn}{
          \draw[->,envArrowColor] (s\x) to node {} (u);  
  } 
\end{tikzpicture}
\end{center}
\vspace{-6ex}
\caption{A POMDP with horizon $n=\lenn$ (in blue), being solved by a full-memory policy (in green).}
\label{fig:pomdp}
\end{figure}
\paragraph{Common knowledge assumptions.}  It is assumed that the principals will have common knowledge of the (full-memory) policy $\pi = (\pi_1,\ldots,\pi_n)$ they select for the agent to implement, but that the principals may have different beliefs about how the environment works, and of course different utility functions.  It is also assumed that the principals have common knowledge of one another's current beliefs at the time of the agent's creation, which we refer to as their their priors.

\emph{This last assumption is critical.}  During the agent's creation, one should expect each principal's beliefs to have updated somewhat in response to disagreements from the other.   Assuming common knowledge of their priors means assuming the principals to have reached an equilibrium where, each knowing what the other believes, they do not wish to further update their own beliefs.\footnote{It is enough to assume the principals have reached a ``persistent disagreement'' that cannot be mediated by the agent in some way.  Future work should design solutions for facilitating the process of attaining common knowledge, or to obviate the need to assume it.}

\subsection{Pareto-optimal policies}

A policy will be considered Pareto-optimal relative to a set of POMDPs it could be deployed to solve.

\begin{definition}[Compatible POMDPs]  We say that two POMDPs, $D\si{1}$ and $D\si{2}$, are \emph{compatible} if any policy for one may be viewed as a policy for the other, i.e., they have the same set of actions $\Aa$ and observations $\Oo$, and the same number of time steps $n$.
\end{definition}

In this context, where a single policy $\pi$ may be evaluated relative to more than one POMDP, we use superscripts to represent which POMDP is governing the probabilities and expectations, e.g.,
\[
\EE\si{j}[U\si{j}; \pi] := \sum_{\seq s \in (\Ss\si{j})^n} \PP\si{j}(\bar s; \pi) U\si{j}(\bar s)
\]
represents the expectation in $D\si{j}$ of the utility function $U\si{j}$, assuming policy $\pi$ is followed.
\begin{definition}[Pareto-optimal policies]  A policy $\pi$ is \emph{Pareto-optimal} for a set of compatible POMDPs $(D\si{1},\ldots,D\si{k})$ if for any other policy $\pi'$ and any $j\in\{1,\ldots,k\}$
\[
\EE\si{j}[U\si{j};\pi'] > \EE\si{j}[U\si{j};\pi] \Rightarrow (\exists \ell) \left(\EE\si{\ell}[U\si{\ell};\pi'] < \EE\si{\ell}[U\si{\ell};\pi]\right),
\]
\end{definition}
It is assumed that, before the agent's creation, the principals will be seeking a Pareto-optimal (full-memory) policy for the agent to follow, relative to the POMDPs $D\si{j}$ describing each principal's view of the agent's task.

\subsection{Example: cake betting}
A quantitative model of a cake betting scenario is laid out in Table \ref{table:scenario}, and described as follows.  

Alice (Principal 1) and Bob (Principal 2) are about to be presented with a cake which they can choose to split in half to share, or give entirely to one of them.  They have (built or purchased) a robot that will make the cake-splitting decision on their behalf.  Alice's utility function returns $0$ if she gets no cake, $20$ if she gets half a cake, or $30$ if she gets a whole cake.  Bob's utility function values Bob getting cake in the same way.

\begin{table*}[t]
\centering
\begin{tabular}{*{6}{|c}|}
\hline
$S_1= O_1$& $\PP\si{1}(O_1)$ & $\PP\si{2}(O_1)$ & $A_1=S_2$ & $U\si{1}$ & $U\si{2}$ \\ \hline
\multirow{ 3}{*}{red cake}
& \multirow{ 3}{*}{$90\%$}
& \multirow{ 3}{*}{$10\%$}
& (all, none) & 30 & 0 \\ 
&&& (half, half) & 20 & 20 \\ 
&&& (none, all) & 0 & 30 \\ 
\cline{1-6}
\multirow{ 3}{*}{green cake}
& \multirow{ 3}{*}{$10\%$}
& \multirow{ 3}{*}{$90\%$}
& (all, none) & 30 & 0 \\ 
&&& (half, half) & 20 & 20 \\ 
&&& (none, all) & 0 & 30 \\ 
\hline
\end{tabular}
\caption{
An example scenario wherein a Pareto-optimal full-memory policy undergoes priority shifting (who gets the cake), based on features that are decision-irrelevant for each principal (cake color).
}
\label{table:scenario}
\end{table*}

However, Alice and Bob have different beliefs about the color of the cake.  Alice is $90\%$ sure that the cake is red ($S_1=O_1=\text{``red"}$), versus $10\%$ sure it will be green ($S_1=O_1=\text{``green"}$), whereas Bob's probabilities are reversed.  

Upon seeing the cake, the robot must decide to either give Alice the entire cake ($A_1=S_2=\text{(all, none)}$), split the cake half-and-half ($A_1=S_2=\text{(half, half)}$), or give Bob the entire cake ($A_1=S_2=\text{(none, all)}$).  Moreover, Alice and Bob have common knowledge of all these facts.

Now, consider the following Pareto-optimal full-memory policy that favors Alice (Principal 1) when $O_1$ is red, and Bob (Principal 2) when $O_1$ is green:
\begin{align*}
&\hat\pi(- \mid \text{red}) = 100\%\text{(all, none)}\\
&\hat\pi(- \mid \text{green}) = 100\%\text{(none, all)}
\end{align*}
This policy can be viewed intuitively as a bet between Alice and Bob about the value of $O_1$, and is highly appealing to both principals:
\begin{align*}
\EE\si{1}[U\si{1}; \hat\pi] &= 90\%(30) + 10\%(0) = 27\\
\EE\si{2}[U\si{2}; \hat\pi] &= 10\%(0) + 90\%(30) = 27
\end{align*}
In particular, $\hat\pi$ is more appealing to both Alice and Bob than an agreement to deterministically split the cake (half, half), which would yield them each an expected utility of $20$.  However, 

\begin{proposition}\label{prop:impossibility}
The Pareto-optimal strategy $\hat\pi$ above cannot be implemented by any agent that na\"{i}vely maximizes a fixed-over-time linear combination of the conditionally expected utilities of the two principals.  That is, it cannot be implemented by any policy $\pi$ satisfying 
\begin{multline}\label{eqn:naive}
\pi(- \mid o_1) \in \argmax_{\alpha\in\Delta A}\left(r\cdot\EE\si{1}[U\si{1} \mid o_1; a_1\sim\alpha] + (1-r)\right. \\ 
\left. \cdot\EE\si{2}[U\si{2} \mid o_1; a_1\sim\alpha]\right)
\end{multline}
for some fixed $r\in[0,1]$.  Moreover, every such policy $\pi$ is strictly worse than $\hat\pi$ in expectation to one of the principals.
\end{proposition}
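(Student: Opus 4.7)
The plan is to exploit a special feature of this scenario: each utility $U\si{j}$ depends only on $S_2 = A_1$, so the conditional expected utilities appearing in \eqn{naive} are functions of the action distribution alone, independent of both $o_1$ and the prior of whichever principal is computing them. Writing $\alpha = (\alpha_1,\alpha_2,\alpha_3)$ for the action probabilities over $(\text{all, none}), (\text{half, half}), (\text{none, all})$, one has $\EE\si{1}[U\si{1}\mid o_1; a_1\sim\alpha] = 30\alpha_1 + 20\alpha_2$ and $\EE\si{2}[U\si{2}\mid o_1; a_1\sim\alpha] = 20\alpha_2 + 30\alpha_3$. Hence the objective in \eqn{naive} reduces to $30r\alpha_1 + 20\alpha_2 + 30(1-r)\alpha_3$, whose argmax over $\Delta A$ is identical for $o_1 = \text{red}$ and $o_1 = \text{green}$.

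Since $\hat\pi$ demands $(1,0,0)$ be a maximizer after red and $(0,0,1)$ be a maximizer after green, comparing the linear coefficients gives $30r \ge 20$ and $30r \ge 30(1-r)$ from the first condition (so $r \ge 2/3$), and $30(1-r) \ge 20$ and $30(1-r) \ge 30r$ from the second (so $r \le 1/3$). These are incompatible, establishing the first claim.

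For the second claim, I would case-split on $r$ and read off the maximizer(s) of $30r\alpha_1 + 20\alpha_2 + 30(1-r)\alpha_3$ over $\Delta A$. For $r > 2/3$ the unique maximizer is $(\text{all, none})$, so $\EE\si{2}[U\si{2};\pi] = 0 < 27$; symmetrically for $r < 1/3$, Alice receives $0$; for $1/3 < r < 2/3$ the unique maximizer is $(\text{half, half})$, giving each principal only $20 < 27$. At the boundary $r = 2/3$ the argmax is the segment between $(\text{all, none})$ and $(\text{half, half})$, on which $\alpha_3 = 0$; Bob's expected utility is therefore $20\alpha_2 \le 20 < 27$, and the $r = 1/3$ case is analogous for Alice. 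In every case some principal strictly underperforms $\hat\pi$.

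The only mildly delicate step is the boundary analysis at $r \in \{1/3, 2/3\}$, where the argmax is a line segment rather than a point; but the geometry of the linear program confines that segment to exclude one extreme action, which keeps one principal's utility capped at $20$. Everything else is direct linear algebra on the three-action simplex.
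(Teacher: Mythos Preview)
Your proof is correct and follows essentially the same case-split on $r$ as the paper's argument in the appendix. The only difference is cosmetic: you first dispatch the impossibility of implementing $\hat\pi$ directly via the incompatible constraints $r\ge 2/3$ and $r\le 1/3$, whereas the paper simply proves the stronger second claim (every naive maximizer is strictly worse for some principal) and lets the first claim follow; your explicit observation that the objective $30r\alpha_1+20\alpha_2+30(1-r)\alpha_3$ is independent of $o_1$ is implicit in the paper's analysis as well.
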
 

\begin{proof}
See appendix.
\end{proof}

This proposition is relatively unsurprising when one considers the full-memory policy $\hat\pi$ intuitively as a bet-settling mechanism, because the nature of betting is to favor different preferences based on future observations.  However, to be sure of this impossibility claim, one must rule out the possibility that the $\hat\pi$ could be implemented by having the agent choose which element of the $\argmax$ in \eqn{naive} to use based on whether the cake appears red or green.  (See appendix.)

\subsection{Characterizing Pareto-optimality geometrically}
With the definitions above, we can characterize a Pareto-optimality as a geometric condition.
\paragraph{Policy mixing assumption.}  Given policies $\pi\si{1},\ldots,\pi\si{R}$ and a distribution $\alpha=(\alpha\si{1},\ldots,\alpha\si{R})\in\Delta\{1,\ldots,R\}$, we assume that the agent may construct a new policy by choosing at time 0 between the $\pi\si{r}$ with probability $\alpha\si{r}$, and then executing the chosen policy for the rest of time.  We write this policy as
$
\pi = \sum_r \alpha\si{r}\pi\si{r},
$
whence we derive:
\begin{equation}\label{eqn:linearity}
\EE\si{j}\left[U\si{j} ; \sum_r \alpha\si{r} \pi\si{r}\right] = \sum_r \alpha\si{r} \EE\si{j}[U\si{j} ; \pi\si{r}].
\end{equation}

\begin{lemma}[Polytope Lemma]\label{lem:pareto}
A full-memory policy $\pi$ is Pareto-optimal to principals $1$ and $2$ if and only if there exist weights $w\si{1},w\si{2}\geq 0$ with $w\si{1}+w\si{2}=1$ such that
\begin{equation}\label{eqn:pareto}
\pi \in \argmax_{\pi^* \in\Pi}\left(w\si{1}\EE\si{1}[U\si{1};\pi^*] + w\si{2}\EE\si{2}[U\si{2}; \pi^*]\right)
\end{equation}
\end{lemma}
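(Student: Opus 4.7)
The plan is to recast both sides of the biconditional as geometric conditions on the image set
$$V := \left\{\left(\EE\si{1}[U\si{1};\pi],\EE\si{2}[U\si{2};\pi]\right) : \pi\in\Pi\right\}\subseteq\RR^2,$$
and then apply a separating hyperplane argument. First I would observe that, by the policy mixing assumption together with equation (\ref{eqn:linearity}), the map sending a mixture $\sum_r \alpha\si{r}\pi\si{r}$ to its expected-utility pair is affine in $\alpha$, so $V$ is convex; it is also bounded since the utilities are bounded on a finite model. Pareto-optimality of $\pi$ is equivalent to its image $v$ being undominated in $V$, while the maximization condition (\ref{eqn:pareto}) is equivalent to $v$ maximizing the linear functional $(x_1,x_2)\mapsto w\si{1}x_1 + w\si{2}x_2$ on $V$.

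For the ``if'' direction, suppose $v$ maximizes the weighted sum and that $\pi'$ has image $v'$ with $v'_j > v_j$ for some $j$. Writing $\ell$ for the other index, the maximization hypothesis gives $w\si{\ell}(v'_\ell - v_\ell) \le -w\si{j}(v'_j - v_j)$, which is strictly negative when $w\si{j}>0$ and hence forces $v'_\ell < v_\ell$, as required. The degenerate case $w\si{j}=0$ is handled by requiring $\pi$ to be chosen, among maximizers of the weighted sum, as one whose image is itself Pareto-undominated in $V$; such a choice exists by compactness of $V$.

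For the ``only if'' direction, suppose $\pi$ is Pareto-optimal, so the open convex set $W := \{x\in\RR^2 : x_1>v_1 \text{ and } x_2>v_2\}$ is disjoint from $V$ (any point of $V$ strictly above $v$ in both coordinates would Pareto-dominate $v$). The separating hyperplane theorem then produces a nonzero $(w_1,w_2)$ and constant $c$ with $w_1x_1+w_2x_2\le c$ on $V$ and $\ge c$ on $W$. Letting points of $W$ approach $v$ pins $c = w_1 v_1 + w_2 v_2$, so $v$ is a maximizer of the functional on $V$. Nonnegativity of each $w_i$ follows from the unboundedness of $W$ in each positive coordinate direction; for instance, $w_1(v_1+t)+w_2(v_2+\epsilon)\ge c$ for all $t,\epsilon>0$ forces $w_1\ge 0$ upon sending $\epsilon\to 0^+$. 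Finally, rescaling by $w_1+w_2>0$ gives the desired normalized weights.

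The main obstacle is the ``only if'' direction, since it relies on the separating hyperplane theorem plus the coordinate-by-coordinate argument that the separator can be chosen with non-negative entries; a secondary subtlety is the boundary-weight edge case in the ``if'' direction, which must be addressed either by strict positivity of the weights or by the tie-breaking refinement described above.
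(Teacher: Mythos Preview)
Your approach is essentially the paper's: both reduce the question to the geometry of the image set $V\subset\RR^2$ and exploit its convexity. The paper is terser---it observes that $V$ is a closed convex polytope (since $V$ is the convex hull of the images of the finitely many deterministic policies) and then appeals without argument to the characterization of Pareto-boundary points of a polytope as maximizers of nonnegative linear functionals. Your separating-hyperplane argument spells this out and needs only convexity and boundedness, not the polytope structure; the paper's route buys brevity, yours buys self-containment.

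You are right to flag the boundary-weight case in the ``if'' direction; the paper's proof glosses over it entirely. However, your tie-breaking refinement does not establish the lemma as literally stated: the lemma asserts that \emph{every} $\pi$ in the $\argmax$ is Pareto-optimal, not merely some well-chosen representative. In fact the ``if'' direction is false as stated when a weight vanishes (take $V=[0,1]^2$, $w\si{1}=0$, $w\si{2}=1$; the point $(0,1)$ maximizes the weighted sum but is dominated by $(1,1)$ under the paper's definition of Pareto optimality). So this is a small imprecision in the lemma's formulation that both proofs inherit; your argument is fully correct for strictly positive weights, and your ``only if'' direction is correct in general.
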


\begin{proof}
The mixing assumption gives the set of policies $\Pi$ the structure of a convex space that the maps $\EE\si{j}[U\si{j}; -] $ respect by \eqn{linearity}.  This ensures that the image of the map $f:\Pi\to\RR\si{2}$ given by
\[
f(\pi) := \left(\EE\si{1}[U\si{1};\pi],\; \EE\si{2}[U\si{2};\pi]\right)
\]
is a closed, convex polytope.  As such, a point $(x,y)$ lies on the Pareto boundary of $\image(f)$ if and only if there exist nonnegative weights $(w\si{1},w\si{2})$, not both zero, such that 
\[
(x,y) \in \argmax_{(x^*,y^*)\in \image(f)} \left(w\si{1}x^* + w\si{2}y^*\right)
\]
After normalizing $w\si{1}+w\si{2}$ to equal $1$, this implies the result.
\end{proof}

\subsection{Characterizing Pareto-optimality probabilistically}  
To help us apply the Polytope Lemma, we will adopt an interpretation wherein the weights $w\si{i}$ are subjective probabilities for the agent, as follows.

For any $w\in\Delta\{1,2\}$, we define a new POMDP, $D$, that works by flipping a $(w\si{1},w\si{2})$-weighted coin, and then running $D\si{1}$ or $D\si{2}$ thereafter, according to the coin flip.  We denote this by 
\[
D=w\si{1}D\si{1} + w\si{2}D\si{2},
\] and call $D$ a \emph{POMDP mixture}.
A formal definition of $D$ is given in the appendix.  It can be depicted by a Bayes net by adding an additional environmental node for $\bool $ in the diagram of $D\si{1}$ and $D\si{2}$ (see \fig{pomdp2}).

\begin{figure}
\pgfmathtruncatemacro{\len}{2}
\pgfmathtruncatemacro{\lenn}{\len + 1}
\pgfmathtruncatemacro{\lennn}{\len + 2}
\begin{center}
\begin{tikzpicture}[scale=0.7, every node/.style={transform shape}, node distance=1cm, auto, very thick, >=stealth']
        \node [draw, circle, fill=boolNodeColor]  (b) at (3,4) {$\bool $};
        
  \foreach \x in {1,...,\lennn}{
        \node [draw, circle, fill=envNodeColor]  (s\x) at (3*\x,2) {$S_\x$};
        \draw [->,boolArrowColor] (b) to node {} (s\x);
        \node [color=envArrowColor, below left = -0.1cm and 0.3cm of s\x] (T\x) {};
        }

  \foreach \x in {1,...,\lenn}{
       \node [draw, circle, fill=envNodeColor]  (o\x) at (3*\x,0) {$O_\x$};
       \draw [->,boolArrowColor] (b) to [out=270,in=145] node [right] {} (o\x);
       \node [draw, rectangle, fill=intNodeColor]  (a\x) at (3*\x+1.4,0) {$A_\x$};
       }

  \foreach \x in {1,...,\lenn}{
      \pgfmathtruncatemacro{\xn}{\x +1 }
      \draw[->,envArrowColor] (s\x) to node [right] {} (o\x);
      \draw[->,intArrowColor] (o\x) to node {} (a\x);
      \draw[->,envArrowColor] (a\x) to node {} (s\xn);
      \draw[->,envArrowColor] (s\x)--(s\xn);
      }
      
  \foreach \x in {1,...,\len}{
      \pgfmathtruncatemacro{\xn}{\x +1 };
      \foreach \y in {\xn,...,\lenn}{
          \draw[->,intArrowColor] (o\x) to [out=330,in=240] node {} (a\y);
          \draw[->,intArrowColor] (a\x) to [out=330,in=240] node {} (a\y);
          }
      }
  
  \node [draw, circle, fill=envNodeColor] (u) at (3*\lennn,4) {$U$};
  \draw[->,boolArrowColor] (b) to node [right] {} (u);

  \foreach \x in {1,...,\lennn}{
          \draw[->,envArrowColor] (s\x) to node {} (u);  
  }
\end{tikzpicture}
\end{center}
\vspace{-6ex}
\caption{A POMDP (mixture) with horizon $n=\lenn$ initialized by a Boolean $\bool $, being solved by a full-memory policy (green)}
\label{fig:pomdp2}
\end{figure}
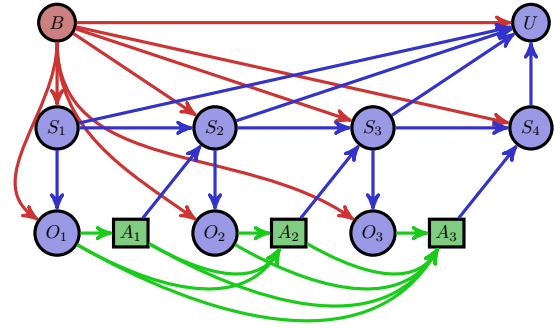

Given any full-memory policy $\pi$, the expected payoff of $\pi$ in $w\si{1}D\si{1}+w\si{2}D\si{2}$ is exactly
\begin{align*}
&\; \PP(\bool =1)\cdot\EE[U \mid \bool =1 ; \pi] \\  
+ &\; \PP(\bool =2)\cdot\EE[U \mid \bool =2 ; \pi]\\
= &\; w\si{1}\EE\si{2}[U\si{1}; \pi] + w\si{2}\EE\si{2}[U\si{2} ; \pi]
\end{align*}

Therefore, using the above definitions, \lem{pareto} may be restated in the following equivalent form:

\begin{lemma}[Mixture Lemma]\label{lem:mixture}
Given a pair $(D\si{1},D\si{2})$ of compatible POMDPs, a full-memory policy $\pi$ is Pareto-optimal for that pair if and only if there exists $w\in\Delta\{1,2\}$ such that $\pi$ is an optimal full-memory policy for the single POMDP given by $w\si{1}D\si{1}+w\si{2}D\si{2}$.  
\end{lemma}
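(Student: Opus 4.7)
The plan is to derive the Mixture Lemma directly from the Polytope Lemma (\lem{pareto}), using the expected-utility identity that the excerpt already sketches as motivation for the statement. The entire content of the lemma is that ``Pareto-optimal policy for $(D\si{1}, D\si{2})$'' and ``optimal policy for some mixture $w\si{1} D\si{1} + w\si{2} D\si{2}$'' are two names for the same set of policies, with the weights $w$ matching in both directions.

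The key step I would carry out first is to verify carefully the identity
\[
\EE[U; \pi] \;=\; w\si{1}\EE\si{1}[U\si{1}; \pi] + w\si{2}\EE\si{2}[U\si{2}; \pi]
\]
for every full-memory policy $\pi$, where the left-hand expectation is taken in $D = w\si{1}D\si{1} + w\si{2}D\si{2}$. This amounts to invoking the tower rule on the Boolean $\bool$ in \fig{pomdp2}: by construction $\PP(\bool = j) = w\si{j}$, and conditional on $\bool = j$ the distribution over $(\seq s, \seq o, \seq a)$ coincides with $\PP\si{j}(\seq s, \seq o, \seq a; \pi)$ from \eqn{pomdp}. Crucially, because the full-memory policy $\pi$ conditions only on the observable history $(o_{\le i}, a_{<i})$ and not on $\bool$, the same $\pi$ functions as a policy in each component POMDP, so the mixture's joint distribution is literally the $w$-weighted mixture of the component joint distributions. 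The utility $U$ in the mixture POMDP is defined to equal $U\si{j}$ on the event $\bool = j$, so the identity follows by linearity of expectation.

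With that identity in hand, both directions of the equivalence are immediate. If $\pi$ is Pareto-optimal for $(D\si{1}, D\si{2})$, then by \lem{pareto} there exist weights $w\si{1}, w\si{2} \geq 0$ summing to $1$ such that $\pi$ maximizes $w\si{1}\EE\si{1}[U\si{1}; \pi^*] + w\si{2}\EE\si{2}[U\si{2}; \pi^*]$ over $\pi^* \in \Pi$; by the identity this expression is $\EE[U; \pi^*]$ in $w\si{1}D\si{1} + w\si{2}D\si{2}$, so $\pi$ is optimal there. Conversely, optimality of $\pi$ in some mixture $w\si{1}D\si{1} + w\si{2}D\si{2}$ translates via the identity into maximization of the weighted sum in \eqn{pareto}, which by \lem{pareto} gives Pareto-optimality. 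The main (very minor) obstacle is purely bookkeeping: confirming that the appendix's formal construction of the mixture POMDP really does give the expectation decomposition above, and in particular that the policy space $\Pi$ for the mixture is the same set of full-memory policies, so that ``optimal in the mixture'' and ``optimal over $\Pi$'' refer to the same maximization.
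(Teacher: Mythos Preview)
Your proposal is correct and is exactly the paper's approach: the paper establishes the identity $\EE[U;\pi]=w\si{1}\EE\si{1}[U\si{1};\pi]+w\si{2}\EE\si{2}[U\si{2};\pi]$ via conditioning on $\bool$ and then simply declares \lem{mixture} to be a restatement of \lem{pareto}. Your write-up is in fact more careful than the paper's, which does not spell out the two directions or the bookkeeping about the policy space.
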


Expressed in the form of \eqn{pareto}, it might not be clear how a Pareto-optimal full-memory policy makes use of its observations over time, aside from storing them in memory.  For example, is there any sense in which the agent carries ``beliefs" about the environment that it ``updates" at each time step?  \lem{mixture} allows us to reduce some such questions about Pareto-optimal policies to questions about single POMDPs.  

If $\pi$ is an optimal full-memory policy for a single POMDP, the optimality of each action distribution $\pi_i(-\mid\hist{i})$ can be characterized without reference to the previous policy components 
$(\pi_1,\ldots,\pi_{i-1})$, nor to $\pi_i(-\mid\althist{i})$ for any alternate history $\althist{i}$.  This can be expressed using Pearl's ``$\Do()$" notation \citep{pearl2009causality}:
\begin{definition}[``do'' notation] The probability of $\seq o$ \emph{causally conditioned on $\seq a$} is defined as
\begin{multline*}
\PP\si{j}(\seq o \mid \Do(\seq a)) \\ := \sum_{\seq s \in (\Ss\si{j})^n} \PP\si{j}(s_1) \cdot \prod_{i=1}^n
\PP\si{j}(o_i \mid s_i)\, \PP\si{j}(s_{i+1} \mid s_ia_i)
\end{multline*}
\end{definition}
\begin{definition}[Expected utility abbreviation]
For brevity, given any POMDP $D$ and policy $\pi$, we write
\[
\val{D}{\pi}{\alpha}{h_i} := \EE[U \mid \hist{i} ;\; a_n\sim \alpha; \; \pi_{>i}].
\]
i.e., the total expected utility in $D$ that would result from replacing $\pi_i(-\mid h_i)$ by $\alpha$.  This quantity does not depend on $\pi_{\le i}$.
\end{definition}
\begin{proposition}[Classical separability]\label{prop:separability}
If $D$ is a POMDP described by conditional probabilities $\PP(-\mid -)$ and utility function $U$ (as in \eqn{pomdp}), then a full-memory policy $\pi$ is optimal for $D$ if and only if for each time step $i$ and each observation/action history $\hist{i}$, the action distribution $\pi_i(-\mid \hist{i})$ satisfies the following backward recursion:
\begin{multline*}
 \pi_i(-\mid\hist{i}) \in \argmax_{\alpha\in\Delta A}\left( \PP\left(\dohist{i}\right)\cdot \val{D}{\pi}{\alpha}{h_i}\right)
\end{multline*}
This characterization of $\pi_i(-|\hist{i})$ does not refer to $\pi_1,\ldots,\pi_{i-1}$, nor to 
$\pi_i(\althist{i})$ for any alternate history $\althist{i}$.
\end{proposition}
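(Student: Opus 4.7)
The plan is to prove this as a standard application of Bellman's principle of optimality, via backward induction on $i$. The key algebraic step is to observe that, for any fixed time step $i$, the expected utility decomposes as
\[
\EE[U;\pi] \;=\; \sum_{\hist{i}} \left(\prod_{k<i}\pi_k(a_k\mid\hist{k})\right)\PP(\dohist{i})\cdot\val{D}{\pi}{\pi_i(-\mid\hist{i})}{\hist{i}},
\]
obtained by regrouping $\sum_{\seq s,\seq o,\seq a}\PP(\seq s,\seq o,\seq a;\pi)U(\seq s)$ according to the length-$i$ history prefix: the environment factors of \eqn{pomdp} sum over $s_{\le i}$ to yield the causal probability $\PP(\dohist{i})$, while the remaining sum over $s_{>i}, o_{>i}, a_{\ge i}$ assembles into $\val{D}{\pi}{\pi_i(-\mid\hist{i})}{\hist{i}}$. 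The right-hand side is affine in the single component $\pi_i(-\mid\hist{i})$ (with all other components of $\pi$ held fixed), and its coefficient for that component is exactly the quantity inside the proposition's argmax.

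For the forward direction ($\Rightarrow$), suppose $\pi$ is optimal but the recursion fails at some $\hist{i}$, so some $\alpha^*$ strictly beats $\pi_i(-\mid\hist{i})$. Since $\val{D}{\pi}{\alpha}{\hist{i}}$ depends only on $\pi_{>i}$, replacing $\pi_i(-\mid\hist{i})$ by $\alpha^*$ alters only that single term in the decomposition; if $\prod_{k<i}\pi_k(a_k\mid\hist{k})>0$ the change in $\EE[U;\pi]$ is strictly positive, contradicting optimality. At histories with that product equal to zero (so $\hist{i}$ is unreached under $\pi$), redefining $\pi_i(-\mid\hist{i})$ to lie in the argmax does not affect $\EE[U;\pi]$, yielding an equivalent optimal policy that satisfies the recursion everywhere.

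For the reverse direction ($\Leftarrow$), introduce the value-to-go $V^{\pi}_i(\hist{i}) := \val{D}{\pi}{\pi_i(-\mid\hist{i})}{\hist{i}}$ and the optimal value $V^{*}_i(\hist{i}) := \sup_{\pi'}\EE[U\mid\hist{i};\pi'_{\ge i}]$. Backward induction on $i$ shows $V^{\pi}_i = V^{*}_i$ at every causally reachable $\hist{i}$: the base case $i=n$ is exactly the pointwise argmax condition on $\EE[U\mid\hist{n};a_n\sim\alpha]$, and the inductive step combines the IH (applied at every continuation $\hist{i+1}$ with positive causal probability) with the argmax condition at $\hist{i}$ to force $V^{\pi}_i(\hist{i})=V^{*}_i(\hist{i})$. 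Specializing at $i=1$ and averaging over $O_1$ yields $\EE[U;\pi]\ge\EE[U;\pi']$ for every competing $\pi'$.

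The main obstacle lies in the gap between ``no single-time deviation from $\pi$ is an improvement,'' which the argmax condition supplies locally at each $\hist{i}$, and global optimality against policies $\pi'$ that may deviate from $\pi$ at many histories simultaneously; the backward induction above bridges that gap by propagating local optimality from the horizon back to $i=1$, using the fact that a full-memory policy can be re-chosen independently at each continuation. The concluding remark of the proposition---that the characterization of $\pi_i(-\mid\hist{i})$ is independent of $\pi_{<i}$ and of $\pi_i(-\mid\althist{i})$ at alternate $\althist{i}$---is then immediate from the formula, since $\val{D}{\pi}{\alpha}{\hist{i}}$ depends only on $\pi_{>i}$ and $\PP(\dohist{i})$ depends only on the environment.
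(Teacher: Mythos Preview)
Your proposal is correct and is precisely the standard argument for Bellman's Principle of Optimality; the paper's own ``proof'' of this proposition is nothing more than a one-line citation to \citet{bellman1957dynamic}, Chap.~III.3, so you have supplied the details the paper defers to that reference and the approaches coincide. Your treatment of $\pi$-unreachable histories in the forward direction (passing to an equivalent optimal policy) is the appropriate way to handle that well-known wrinkle, and the paper does not engage with it at all.
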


\begin{proof}
This is just Bellman's Principle of Optimality.  See  \citep{bellman1957dynamic}, Chap. III. 3.
\end{proof}

\emph{N.B.: Unlike Bellman's ``backup" equation, the above proposition requires no assumption whatsoever on the form of the utility function.  Note also that when the probability term $\PP(\dohist{i})$ is non-zero, it may be removed from the $\argmax$ without changing the theorem statement.  But when the term is zero, its presence is essential, and implies that $\pi_i(-\mid\hist{i})$ can be anything.}

It turns out that Pareto-optimality can be characterized in a similar way by backward recursion from the final time step.  The resulting recursion reveals a pattern in how the weights on the principals' conditionally expected utilities must change over time, which is the main result of this paper:

\begin{theorem}[Pareto-optimal control theorem]\label{thm:main}
Given a pair $(D\si{1},D\si{2})$ of compatible POMDPs with horizon $n$, a full-memory policy $\pi$ is Pareto-optimal if and only if its components $\pi_i$ for $i\le n$ satisfy the following backward recursion for some weights $w\in\Delta\{1,2\}$:
\begin{align*}
\pi\si{i}&(-\mid\hist{i}) \in \argmax_{\alpha\in\Delta A} \biggl(\vphantom{\left(\left(\right)\right)}\biggr.\\
 &\; w\si{1} \PP\si{1}\left(\dohist{i}\right) \cdot \val{D^1}{\pi}{\alpha}{h_i}\\
+ &\; \biggl.w\si{2} \PP\si{2}\left(\dohist{i}\right) \cdot \val{D^2}{\pi}{\alpha}{h_i} \biggr)
\end{align*}

In words, to achieve Pareto-optimality, the agent must
\begin{enumerate}
\item use each principal's own world-model $D^j$ when estimating the degree $\val{D^j}{\pi}{\alpha}{h_i}$ to which a decision $\alpha$ favors that principal's utility function, and
\item shift the relative priority of each principal's expected utility in the agent's maximizationtarget over time, by a factor proportional to how well that principal's prior predicts the agent's observations, $\PP\si{i}\left(\dohist{i}\right)$.
\end{enumerate}
\end{theorem}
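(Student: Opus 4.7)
The plan is to reduce to the single-POMDP setting via the Mixture Lemma, apply Proposition~\ref{prop:separability} to the mixture, and then split the resulting optimization target into the two principal-indexed pieces. By Lemma~\ref{lem:mixture}, $\pi$ is Pareto-optimal for $(D\si{1}, D\si{2})$ if and only if there exist weights $w \in \Delta\{1,2\}$ such that $\pi$ is optimal for the mixture POMDP $D := w\si{1}D\si{1} + w\si{2}D\si{2}$. Applying classical separability to $D$ yields
$$\pi_i(-\mid \hist{i}) \in \argmax_{\alpha \in \Delta A}\Bigl(\PP\bigl(\dohist{i}\bigr)\cdot\val{D}{\pi}{\alpha}{h_i}\Bigr),$$
so the whole theorem reduces to establishing the identity
$$\PP\bigl(\dohist{i}\bigr)\cdot\val{D}{\pi}{\alpha}{h_i} \;=\; \sum_{j=1}^{2} w\si{j}\,\PP\si{j}\bigl(\dohist{i}\bigr)\cdot\val{D\si{j}}{\pi}{\alpha}{h_i}. \quad (\star)$$

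The key step is to prove $(\star)$ by conditioning everything in $D$ on the mixing Boolean $B$. By construction of the mixture, $\PP(B=j) = w\si{j}$ and $\PP(-\mid B=j,\, \Do(a_{<i})) = \PP\si{j}(-\mid \Do(a_{<i}))$, so total probability gives $\PP(\dohist{i}) = \sum_j w\si{j} \PP\si{j}(\dohist{i})$. For the value term, the tower property yields
$$\val{D}{\pi}{\alpha}{h_i} \;=\; \sum_{j=1}^{2} \PP\bigl(B=j \,\bigm|\, o_{\le i},\, \Do(a_{<i})\bigr)\cdot \val{D\si{j}}{\pi}{\alpha}{h_i},$$
because conditioning on $B=j$ turns $D$ into $D\si{j}$ while leaving $\pi$ unchanged. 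A do-analog of Bayes' rule then identifies the posterior as $\PP(B=j \mid o_{\le i}, \Do(a_{<i})) = w\si{j}\PP\si{j}(\dohist{i})/\PP(\dohist{i})$. Multiplying the two factors cancels the denominator and gives $(\star)$ on the support where $\PP(\dohist{i}) > 0$; on the zero-probability support both sides vanish, and, as in the remark following Proposition~\ref{prop:separability}, the argmax constraint is vacuous there.

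The main obstacle I anticipate is making these causal-conditioning manipulations fully rigorous, in particular the do-version of Bayes' rule used above. Because $B$ is a root node of the Bayesian network whose descendants are only the environment variables $S_k, O_k, U$ and never the action nodes $A_k$, an intervention on $a_{<i}$ does not sever any edge incident to $B$, so the standard Bayes derivation applies verbatim to the distribution $\PP(-\mid \Do(a_{<i}))$. I would verify this step once explicitly from the formal definition of $w\si{1}D\si{1} + w\si{2}D\si{2}$ provided in the appendix, after which substituting into the mixture argmax is a one-line calculation that produces exactly the weighted recursion stated in the theorem.
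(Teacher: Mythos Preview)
Your proposal is correct and follows essentially the same route as the paper's own proof: reduce to the mixture POMDP via the Mixture Lemma, apply classical separability (Proposition~\ref{prop:separability}), decompose the value term by conditioning on $B$, and cancel using the do-version of Bayes' rule. Your treatment is in fact slightly more explicit than the paper's---you isolate the key identity $(\star)$, address the zero-probability histories, and justify the causal Bayes step via the position of $B$ in the network---but the underlying argument is the same.
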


\emph{N.B.: The analogous result for more than two POMDPs holds as well, with essentially the same proof.}

\begin{proof}[Proof of \thm{main}]
By \lem{mixture}, the Pareto-optimality of $\pi$ for $(D\si{1},D\si{2})$ is equivalent to its classical optimality for
$D=w\si{1}D\si{1} + w\si{2}D\si{2}$ for some $(w\si{1},w\si{2})$.
Writing $\PP$ for probabilities in D, \prop{separability} says this is equivalent to $\alpha = \pi\si{i}(-\mid\hist{i})$ maximizing the following expression $F(\alpha)$ for each $i$:
\begin{equation}\label{eqn:score}
F(\alpha)=\PP\left(\dohist{i}\right)\cdot \val{D}{\pi}{\alpha}{\hist i}.
\end{equation}
The expectation factor on the right equals
\begin{align*}
\val{D}{\pi}{\alpha}{\hist i} 
= &\; \PP\left(B=1\mid o_{\le i}, \Do(a_{<i})\right)\cdot\val{D\si{1}}{\pi}{\alpha}{\hist i}\\
+ &\; \PP\left(B=2\mid o_{\le i}, \Do(a_{<i})\right)\cdot\val{D\si{2}}{\pi}{\alpha}{\hist i}.
 \end{align*}
Multiplying by 
\begin{align*}
\PP\left(\dohist{i}\right) =&\; w\si{1} \PP\si{1}\left(\dohist{i}\right)\\
 +&\; w\si{2} \PP\si{2}\left(\dohist{i}\right) 
\end{align*}
and applying Bayes' rule yields that 
\begin{align*}
F(\alpha) = &\; w\si{1} \PP\si{1}\left(\dohist{i}\right)\val{D\si{1}}{\pi}{\alpha}{\hist i}\\
+ &\; w\si{2} \PP\si{2}\left(\dohist{i}\right) \val{D\si{2}}{\pi}{\alpha}{\hist i},
\end{align*}
hence the result.
\end{proof}

To see the necessity of the $\PP\si{j}$ terms that shift the expectation weights in \thm{main} over time, recall from \prop{impossibility} that, without these, some Pareto-optimal policies cannot be implemented.  These $\PP\si{j}$ terms are responsible for the ``bet-settling" phenomena discussed in the introduction.

However, when the principals have the same beliefs, they aways assign the same probability to the agent's observations, so the weights on their respective valuations do not change over time.  Hence, as a special instance, we derive:

\begin{corollary}[Harsanyi's utility aggregation formula]\label{cor:harsanyi}
Suppose that principals 1 and 2 share the same beliefs about the environment, i.e., the pair $(D\si{1},D\si{2})$ of compatible POMDPs agree on all parameters except the principals' utility functions $U\si{1}\neq U\si{2}$. Then a full-memory policy $\pi$ is Pareto-optimal if and only if there exists $w\in\Delta\{1,2\}$ such that for $i\le n$, $\pi_i$ satisfies
\begin{multline*}
\pi\si{i}(-\mid\hist{i}) \in \argmax_{\alpha\in\Delta A} \left(\right. \\
\left. \EE[w\si{1}U\si{1}+w\si{2}U\si{2}] \mid \hist{i} ;\; a_i\sim \alpha;\; \pi_{>i}]\right)
\end{multline*}
where $\EE=\EE\si{1}=\EE\si{2}$ denotes the shared expectations of both principals.
\end{corollary}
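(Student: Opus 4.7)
The corollary is essentially an algebraic simplification of \thm{main} under the extra hypothesis that $\PP\si{1} = \PP\si{2}$ on all variables except the utilities. My plan is to invoke \thm{main} and then collapse the two weighted terms into a single expectation, handling separately the degenerate case where the shared probability of the history vanishes.

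\textbf{Step 1: Apply \thm{main}.} By \thm{main}, a full-memory policy $\pi$ is Pareto-optimal for $(D\si{1},D\si{2})$ if and only if there exist $w\in\Delta\{1,2\}$ such that for every $i$ and every history $\hist{i}$,
\[
\pi_i(-\mid\hist{i}) \in \argmax_{\alpha\in\Delta A}\Bigl( w\si{1}\PP\si{1}\bigl(\dohist{i}\bigr)\val{D\si{1}}{\pi}{\alpha}{h_i} + w\si{2}\PP\si{2}\bigl(\dohist{i}\bigr)\val{D\si{2}}{\pi}{\alpha}{h_i}\Bigr).
\]

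\textbf{Step 2: Collapse the weights.} Because the two POMDPs differ only in their utility functions, the state/observation/action distributions they induce are identical, so $\PP\si{1}(\dohist{i}) = \PP\si{2}(\dohist{i})$; call this common value $q$. If $q>0$, we can divide the argmax expression by $q$ without affecting the maximizer, reducing it to
\[
\argmax_{\alpha\in\Delta A}\bigl( w\si{1}\val{D\si{1}}{\pi}{\alpha}{h_i} + w\si{2}\val{D\si{2}}{\pi}{\alpha}{h_i}\bigr).
\]
Since the conditional distribution on $\seq s$ given $(\hist{i}, a_i\sim\alpha, \pi_{>i})$ is the same in $D\si{1}$ and $D\si{2}$, linearity of expectation rewrites this sum as $\EE[w\si{1}U\si{1}+w\si{2}U\si{2} \mid \hist{i}; a_i\sim\alpha;\pi_{>i}]$, which is exactly the quantity appearing in the corollary. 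If instead $q=0$, both sides of the equivalence allow $\pi_i(-\mid\hist{i})$ to be arbitrary (the original argmax is the whole simplex, and the ``reduced'' argmax is a vacuous condition on an impossible history), so the equivalence still holds.

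\textbf{Step 3: Assemble both directions.} The above substitution is reversible: any policy satisfying the corollary's recursion satisfies the one in \thm{main} (multiplying back by the common factor $q$) and vice versa. Hence $\pi$ is Pareto-optimal if and only if it satisfies the corollary's recursion for some $w\in\Delta\{1,2\}$.

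\textbf{Expected obstacle.} The only subtle point is the bookkeeping around histories of probability zero, since \thm{main}'s formulation retains the probability factor specifically so that impossible histories impose no constraint on $\pi$. I would address this with an explicit one-line remark that when $q=0$, both argmax sets equal $\Delta A$, matching the analogous ``N.B.'' following \prop{separability}. Otherwise the argument is pure algebra.
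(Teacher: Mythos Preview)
Your proposal is correct and follows essentially the same route as the paper: invoke \thm{main}, factor out the common coefficient $\PP\si{1}(\dohist{i})=\PP\si{2}(\dohist{i})$, and use linearity of expectation. Your explicit treatment of the $q=0$ case is a welcome addition that the paper's one-line proof leaves implicit.
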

\begin{proof}
Setting $\EE=\EE\si{1}=\EE\si{2}$ in \thm{main}, factoring out the common coefficient $\PP\si{1}\left(\dohist{i}\right)=\PP\si{2}\left(\dohist{i}\right)$, and applying linearity of expectation yields the result.
\end{proof}

\section{Conclusion}

\thm{main} exhibits a novel form for the objective of a sequential decision-making policy that is Pareto-optimal according to principals with differing beliefs.  

This form represents two departures from na\"{i}ve utility aggregation: to achieve Pareto-optimality for principals with differing beliefs, an agent must (1) use each principal's own beliefs (updated on the agent's observations) when evaluating how well an action will serve that principal's utility function, and (2) shift the relative priority it assigns to each principal's expected utilities over time, by a factor proportional to how well that principal's prior predicts the agent's observations.

\subsection{Implications for contract design}

\thm{main} has implications for modeling and structuring the process of contract design.  If a contract is being created between principals with different beliefs, then to the extent that the principals will target Pareto-optimality among them as an objective, there will be a tendency for the contract to end up implicitly settling bets between the principals.  Perhaps making the bet-settling nature of Pareto-optimal contract design more explicit might help to design contracts that are more attractive to both principals, along the lines illustrated by \prop{impossibility}.  This could potentially lead to more successful negotiations, provided the principals remained willing to uphold the contract after its implicit bets have been settled.

\subsection{Implications for shareable AI systems}

\prop{impossibility} shows how the Pareto-optimal form of \thm{main} is more attractive---from the perspective of the principals---than policies that do not account for differences in their beliefs.  The relative attractiveness of shared ownership versus individual ownership of AI systems may be essential to the technological adoption of shared systems.  Consider the following product substitutions that might be enabled by the development of shareable machine learning systems:

\begin{itemize}

\item Office assistant software jointly controlled by a team, as an improvement over personal assistant software for each member of the team.

\item A team of domestic robots controlled by a family, as an improvement over individual robots each controlled by a separate family member.

\item A web-based security system shared by several interested companies or nations, as an improvement over individual security systems deployed by each group.

\end{itemize}

It may represent a significant technical challenge for any of these substitutions to become viable.  However, machine learning systems that are able to approximate Pareto-optimality as an objective are more likely to be sufficiently appealing to motivate the switch from individual control to sharing.

\subsection{Implications for bargaining versus racing}

Consider two nations---allies or adversaries---who must decide whether to cooperate in the deployment of a very powerful and autonomous AI system. 

If the nations cannot reach agreement as to what policy a jointly owned AI system should follow, joint ownership may be less attractive than building separate AI systems, one for each party.  This could lead to an arms race between nations competing under time pressure to develop ever more powerful militarized AI systems. Under such race conditions, everyone loses, as each nation is afforded less time to ensure the safety and value alignment of its own system. 

The first author's primary motivation for this paper is to initiate a research program with the mission of averting such scenarios.  Beginning work today on AI architectures that are more amenable to joint ownership could help lead to futures wherein powerful entities are more likely to share and less likely to compete for the ownership of such systems. 

\subsection{Future work}

Insofar as \thm{main} is not particularly mathematically sophisticated---it employs only basic facts about convexity and linear algebra---this suggests there may be more low-hanging fruit to be found in the domain of ``machine implementable social choice theory".   Future work should address methods for helping the principals to share information---perhaps in exchange for adjustments to the weights in \thm{main}---to reach either a state of agreement or a persistent disagreement that allows the theorem to be applied.  More ambitiously, bargaining models that account for a degree of transparency between the principals should be employed, as individual humans and institutions have some capacity for detecting one another's intentions.

As well, scenarios where the principals continue to exhibit some active control over the system after its creation should be modeled in detail.  In real life, principals usually continue to exist in their agents' environments, and accounting for this will be a separate technical challenge.

As a final motivating remark, consider that social choice theory and bargaining theory were both pioneered during the Cold War, when it was particularly compelling to understand the potential for cooperation between human institutions that might behave competitively.  In the coming decades, machine intelligence will likely bring many new challenges for cooperation, as well as new means to cooperate, and new reasons to do so.  As such, new technical aspects of social choice and bargaining will likely continue to emerge.

\section{Appendix}

Here we make available the technical details for defining POMDP mixtures, and proving that certain Pareto-optimal expectations cannot be obtained without priority-shifting.

\begin{definition}[POMDP mixtures]\label{defn:mixture} Suppose that $D\si{1}$ and $D\si{2}$ are compatible POMDPs, with parameters 
$D\si{j} = (\Ss\si{j},\Aa,T\si{j}, U\si{j}, \Oo, \Omega\si{j}, n)$.  Define a new POMDP compatible with both, denoted $D=w\si{1}D\si{1} + w\si{2}D\si{2}$, with parameters 
$D\si{j} = (\Ss,\Aa,T, U, \Oo, \Omega, n)$, as follows:
\begin{itemize}
\item $\Ss:= \{(j,s) \mid j\in\{1,2\}, s\in\Ss\si{j}\}$,
\item Environmental transition probabilities $T$ given by
\begin{align*}
& \PP\left((j,s_1)\right) := \: w\si{j}\cdot \PP\si{j}(s_1)\\
\intertext{for any initial state $s_1\in\Ss\si{j}$, and thereafter,}
& \PP\left((j',s_{i+1}) \mid (j,s_i), a_i \right) & \\
& := \;
\begin{cases}
\PP\si{j}\left(s_{i+1} \mid s_ia_i\right) &\mbox{ if $j'=j$}\\
0 & \mbox { if $j'\neq j$}
\end{cases}
\end{align*}

Hence, the value of $j$ will be constant over time, so a full history for the environment may be represented by a pair
\[
(j,\seq s) \in \{1\}\times (\Ss\si{1})^n \cup \{2\}\times(\Ss\si{2})^n.
\]
Let $\bool $ denote the boolean random variable that equals whichever constant value of $j$ obtains, so then
\[
\PP(\bool =j) = w\si{j}
\]
\item The utility function $U$ is given by
\[
U(j,\seq s) := U\si{j}(\seq s)
\]
\item The observation probabilities $\Omega$ are given by
\[
\PP\left(o_i \mid (j,s_i)\right) := \PP(\bool =j) \cdot \PP\si{j}(o_i \mid s_i)
\]
In particular, the agent does not observe directly whether $j=1$ or $j=2$.
\end{itemize}

\end{definition}
\begin{proof}[\textbf{Proof of \prop{impossibility}}] Suppose $\pi$ is any policy satisfying \eqn{naive} for some fixed $r$, and consider the following cases for $r$:
\begin{enumerate}
\item If $r < 1/3 $, then $\pi$ must satisfy
\[
\pi(-\mid o_1) = 100\%\text{(none, all)}.
\] 
Here, $\EE\si{1}[U\si{1} ; \pi] = 0 < 27$, so $\pi$ is strictly worse than $\hat\pi$ in expectation to Alice.

\item If $r = 1/3 $, then $\pi$ must satisfy
\[
\pi(-\mid o_1) = q(o_1)\text{(none, all)} + (1-q(o_1))\text{(half, half)}
\] 
for some $q(o_1)\in[0,1]$ depending on $o_1$.  Here, $\EE\si{1}[U\si{1} ; \pi] \le 20 < 27$ (with equality when $q(\text{red})=q(\text{green})=1$), so $\pi$ is strictly worse than $\hat\pi$ in expectation to Alice.

\item If $1/3 < r < 2/3 $, then $\pi$ must satisfy
\[
\pi(-\mid o_1)=100\%\text{(half, half)}
\]
Here, $\EE\si{1}[U\si{1} ; \pi] = \EE\si{2}[U\si{2} ; \pi] = 20 < 27$, so $\pi$ is strictly worse than $\hat\pi$ in expectation to both Alice and Bob.
\end{enumerate}
The remaining cases, $r=2/3$ and  $r>2/3$, are symmetric to the first two, with Bob in place of Alice and (none, all) in place of (all, none).

Hence, no fixed linear combination of the principals' utility functions can be maximized to simultaneously achieve an expected utility of 27 for both players.
\end{proof}









\bibliographystyle{named}
\bibliography{main}

\end{document}